\documentclass[sigconf]{aamas}  
\usepackage{balance}  

\settopmatter{printacmref=true}

\fancyhead{}

\usepackage{booktabs}

\usepackage{algorithm}
\usepackage{algpseudocode}

\usepackage{graphicx}
\usepackage{subcaption}

\usepackage{array}
\newcolumntype{P}[1]{>{\centering\arraybackslash}p{#1}}

\setcopyright{ifaamas}  
\acmDOI{}  
\acmISBN{}  
\acmConference[AAMAS'19]{Proc.\@ of the 18th International Conference on Autonomous Agents and Multiagent Systems (AAMAS 2019)}{May 13--17, 2019}{Montreal, Canada}{N.~Agmon, M.~E.~Taylor, E.~Elkind, M.~Veloso (eds.)}  
\acmYear{2019}  
\copyrightyear{2019}  
\acmPrice{}  



\begin{document}

\title{Online Abstraction with MDP Homomorphisms \\ for Deep Learning}  
\titlenote{This research was partially supported by grant no. GA18-18080S of the Grant Agency of the Czech Republic, grant no. EF15\_003/0000421 of the Ministry of Education, Youth and Sports of the Czech Republic and by the National Science Foundation through IIS-1427081, IIS-1724191, and IIS-1724257, NASA through NNX16AC48A and NNX13AQ85G, ONR through N000141410047, Amazon through an ARA to Platt, and Google through a FRA to Platt. We thank Tokine Atsuta, Brayan Impata and Nao Ouyang for useful feedback on the manuscript.}

\author{Ondrej Biza}
\orcid{0000-0003-3390-8050}
\affiliation{
  \institution{Czech Technical University}
  \city{Prague} 
  \country{Czech Republic}
}
\email{bizaondr@fit.cvut.cz}
\author{Robert Platt}
\affiliation{
  \institution{Northeastern University}
  \city{Boston}
  \state{MA}
  \country{USA}
}
\email{rplatt@ccs.neu.edu}

\begin{abstract}
Abstraction of Markov Decision Processes is a useful tool for solving complex problems, as it can ignore unimportant aspects of an environment, simplifying the process of learning an optimal policy. In this paper, we propose a new algorithm for finding abstract MDPs in environments with continuous state spaces. It is based on MDP homomorphisms, a structure-preserving mapping between MDPs. We demonstrate our algorithm's ability to learn abstractions from collected experience and show how to reuse the abstractions to guide exploration in new tasks the agent encounters. Our novel task transfer method outperforms baselines based on a deep Q-network in the majority of our experiments. The source code is at~\url{https://github.com/ondrejba/aamas_19}.
\end{abstract}

\keywords{reinforcement learning; abstraction; mdp homomorphism; transfer learning; deep learning}

\maketitle


\section{Introduction}

The ability to create useful abstractions automatically is a critical tool for an autonomous agent. Without this, the agent is condemned to plan or learn policies at a relatively low level of abstraction, and it becomes hard to solve complex tasks. What we would like is the ability for the agent to learn new skills or abstractions over time that gradually increase its ability to solve challenging tasks. This paper explores this in the context of reinforcement learning.

There are two main approaches to abstraction in reinforcement learning: temporal abstraction and state abstraction. In temporal abstraction, the agent learns multi-step skills, i.e. policies for achieving subtasks. In state abstraction, the agent learns to group similar states together for the purposes of decision making. For example, for handwriting a note, it may be irrelevant whether the agent is holding a pencil or a pen. In the context of the Markov Decision Process (MDP), state abstraction can be understood using an elegant approach known as the MDP homomorphism framework~\cite{ravindran2004}. An MDP homomorphism is a mapping from the original MDP to a more compact MDP that preserves the important transition and reward structure of the original system. Given an MDP homomorphism to a compact MDP, one may solve the original problem by solving the compact MDP and then projecting those solutions back onto the original problem. Figure~\ref{fig:fig_B} illustrates this in the context of a toy-domain puck stacking problem. The bottom left of Figure~\ref{fig:fig_B} shows two pucks on a $4 \times 4$ grid. The agent must pick up one of the pucks (bottom middle of Figure~\ref{fig:fig_B}) and place it on top of the other puck (bottom right of Figure~\ref{fig:fig_B}). The key observation to make here is that although there are many different two-puck configurations (bottom right of Figure~\ref{fig:fig_B}), they are all equivalent in the sense that the next step is for the agent to pick up one of the pucks. In fact, for puck stacking, the entire system can be summarized by the three-state MDP shown at the top of Figure~\ref{fig:fig_B}. This compact MDP is clearly a useful abstraction for this problem.

\begin{figure}[t]
    \includegraphics[width=0.45\textwidth]{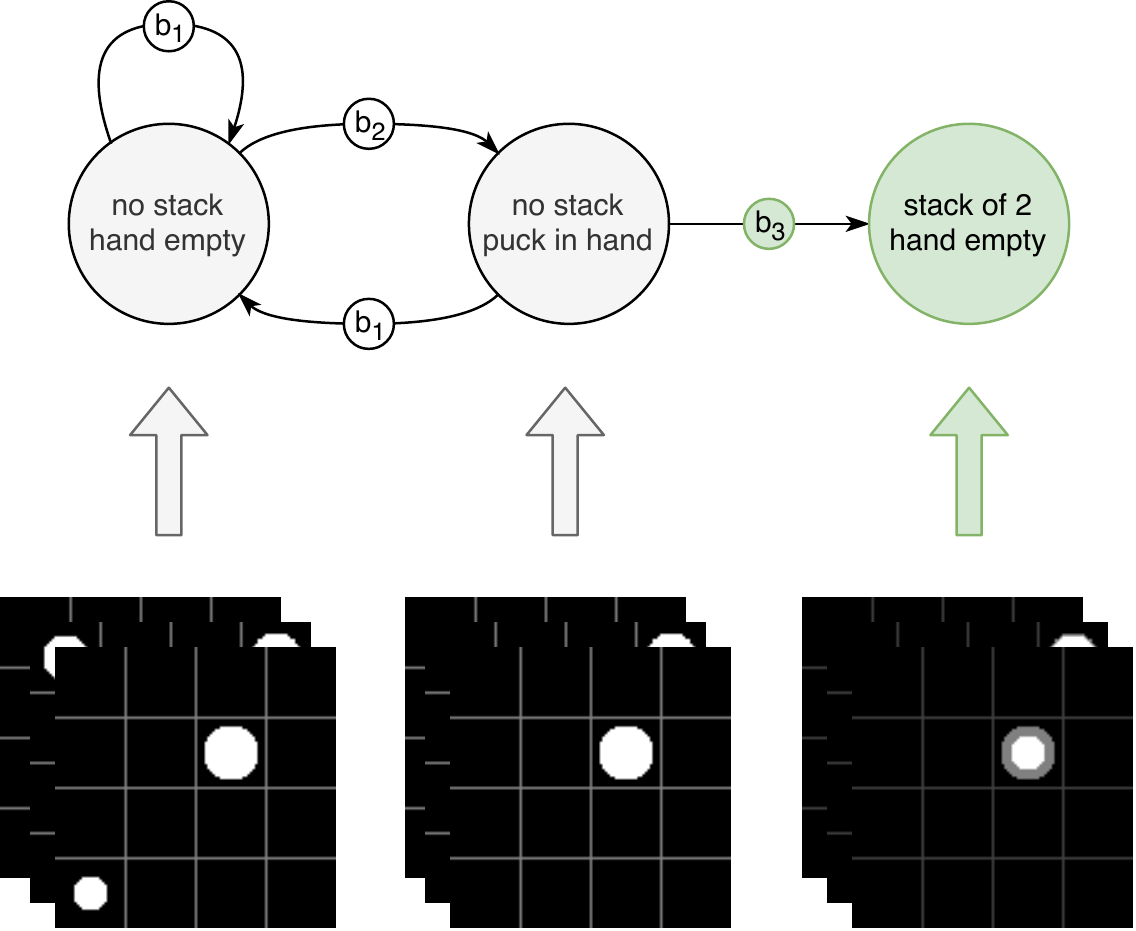}
    \caption{Abstraction for the task of stacking two pucks on top of one another. The diagram shows a minimal quotient MDP (top) that is homomorphic to the underlying MDP (bottom). The minimal MDP has three states, the last of them being the goal state, and four actions. Each action is annotated with the state-action block that induced it. Two actions are annotated with $b_1$ because they both lead to the first state.}
    \label{fig:fig_B}
\end{figure}

Although MDP homomorphisms are a useful mechanism for abstraction, it is not yet clear how to learn the MDP homomorphism mapping from experience in a model-free scenario. This is particularly true for a deep reinforcement learning context where the state space is effectively continuous. The closest piece of related work is probably that of~\cite{wolfe2006} who study the MDP homomorphism learning problem in a narrow context. This paper considers the problem of learning general MDP homomorphisms from experience. We make the following key contributions:

\noindent
\textbf{\#1:} We propose an algorithm for learning MDPs homomorphisms from experience in both discrete and continuous state spaces (Subsection \ref{sub_partitioning_algorithm}). The algorithm groups together state-action pairs with similar behaviors, creating a partition of the state-action space. The partition then induces an abstract MDP homomorphic to the original MDP. We prove the correctness of our method in Section \ref{sec_correctness}.

\noindent
\textbf{\#2:} Our abstraction algorithm requires a learning component. We develop a classifier based on the Dilated Residual Network \cite{yu2017} that enables our algorithm to handle medium-sized environments with continuous state spaces. We include several augmentations, such as oversampling the minority classes and thresholding the confidences of the predictions (Subsection \ref{sub_speeding_up}). We test our algorithm in two environments: a set of continuous state space puck stacking tasks, which leverage the convolutional network (Subsection \ref{continuous_pucks_world}), and a discrete state space blocks world task (Subsection \ref{discrete_blocks_world}), which we solve with a decision tree.

\noindent
\textbf{\#3:} We propose a transfer learning method for guiding exploration in a new task with a previously learned abstract MDP (Subsection \ref{sub_transferring}). Our method is based on the framework of options \cite{sutton1999}: it can augment any existing reinforcement learning agent with a new set of temporally-extended actions. The method outperforms two baselines based on a deep Q-network \cite{mnih2015} in the majority of our experiments.

\section{Background}

\subsection{Reinforcement Learning}

An agent's interaction with an environment can be modeled as a Markov Decision Process (MDP, \cite{bellman57}). An MDP is a tuple $\langle S, A, \Phi, P, R \rangle$, where $S$ is the set of states, $A$ is the set of actions, $\Phi \subset S{\times}A$ is the state-action space (the set of available actions for each state), $P(s,a,s')$ is the transition function and $R(s,a)$ is the reward function. 

We use the framework of options \cite{sutton1999} to transfer knowledge between similar tasks. An option $\langle I, \pi, \beta \rangle$ is a temporally extended action: it can be executed from the set of states $I$ and selects primitive actions with a policy $\pi$ until it terminates. The probability of terminating in each state is expressed by $\beta: S \rightarrow [0,1]$ .

\subsection{Abstraction with MDP homomorphisms}

Abstraction in our paper aims to group similar state-action pairs from the state-action space $\Phi$. The grouping can be described as a partitioning of $\Phi$.

\begin{definition}\label{def_partition_of_MDP}

A \textit{partition of an MDP} $M= \langle S,A,\Phi,P,R \rangle $ is a partition of $\Phi$. Given a partition $B$ of $M$, the \textit{block transition probability of} $M$ is the function $T:\Phi \times B|S \rightarrow [0,1]$ defined by $T(s,a,[s']_{B|S}) = \sum_{s''\in[s']_{B|S}}P(s,a,s'')$. 

\end{definition}

\begin{definition}\label{def_refinement}

A partition~$B'$ is a \textit{refinement} of a partition $B$, $B' \ll B$, if and only if each block of $B'$ is a subset of some block of~$B$.

\end{definition}

The partition of $\Phi$ is projected on the state space $S$ to obtain a grouping of states.

\begin{definition}\label{def_projection}

Let $B$ be a partition of $Z \subseteq X \times Y$, where $X$ and $Y$ are arbitrary sets. For any $x \in X$, let $B(x)$ denote the set of distinct blocks of $B$ containing pairs of which $x$ is a component, that is, $B(x) = \{[(w,y)]_B\ |\ (w,y) \in Z, w = x\}$. The \textit{projection of B onto X} is the partition $B|X$ of $X$ such that for any $x,x' \in X$, $[x]_{B|X} = [x']_{B|X}$ if and only if $B(x) = B(x')$.

\end{definition}

Next, we define two desirable properties of a partition over $\Phi$.

\begin{definition}\label{def_reward_respecting}

A partition B of an MDP $M= \langle S,A,\Phi,P,R \rangle $ is said to be \textit{reward respecting} if $(s_1,a_1)\ {\equiv}_B\ (s_2,a_2)$ implies $R(s_1,a_1)=R(s_2,a_2)$ for all $(s_1,a_1),(s_2,a_2)\in\Phi$.

\end{definition}

\begin{definition}\label{def_ssp}

A partition B of an MDP $M= \langle S,A,\Phi,P,R \rangle $ has the \textit{stochastic substitution property} (SSP) if for all $(s_1,a_1),(s_2,a_2)\in\Phi$, $(s_1,a_1)\ {\equiv}_B\ (s_2,a_2)$ implies $T(s_1,a_1,[s]_{B|S}) = T(s_2,a_2,[s]_{B|S})$ for all $[s]_{B|S}\ {\in}\ B|S$.

\end{definition}

Having a partition with these properties, we can construct the \textit{quotient MDP} (we also call it the \textit{abstract MDP}).

\begin{definition}\label{def_quotient_mdp}

\tolerance 1414 Given a reward respecting SSP partition $B$ of an MDP $M = \langle S,A,\Phi,P,R \rangle$, the \textit{quotient MDP} $M/B$ is the MDP $\langle S',A',\Phi',P',R' \rangle$, where $S' = B|S$; $A' = \bigcup\limits_{[s]_{B|S}{\in}S'} A'_{[s]_{B|S}}$ where $A'_{[s]_{B|S}}= \{a'_1, a'_2, ..., a'_{\eta(s)}\}$ for each $[s]_{B|S} \in S'$; $P'$ is given by $P'([s]_f, a'_i, [s']_f) = T_b([(s,a_i)]_B, [s']_{B|S})$ and $R'$ is given by $R'([s]_{B|S}, a'_i) = R(s,a_i)$. $\eta(s)$ is the number of distinct classes of~$B$ that contain a state-action pair with $s$ as the state component.

\end{definition}

We want the quotient MDP to retain the structure of the original MDP while abstracting away unnecessary information. MDP homomorphism formalizes this intuition.

\begin{definition}\label{def_homomorphism}

\tolerance 1414 An \textit{MDP homomorphism} from $M= \langle S,A,\Phi,P,R \rangle $ to $M'= \langle S',A',\Phi',P',R' \rangle $ is a tuple of surjections $ \langle f,\{g_s:s \in S\} \rangle $ with $h(s,a)=(f(s),g_s(a))$, where $f:S \rightarrow S'$ and $g_s:A \rightarrow A'$ such that $R(s,a)=R'(f(s),g_s(a))$ and $P(s,a,f^{-1}(f(s')))=P'(f(s),g_s(a),f(s'))$. We call $M'$ a \textit{homomorphic image} of $M$ under $h$.

\end{definition}

The following theorem states that the quotient MDP defined above retains the structure of the original MDP.

\begin{theorem}[\cite{ravindran2004}]\label{theorem_quotient_is_homomorphic}

Let $B$ be a reward respecting SSP partition of MDP $M = \langle S,A,\Phi,P,R \rangle$. The quotient MDP $M/B$ is a homomorphic image of $M$.

\end{theorem}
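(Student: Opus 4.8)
The plan is to exhibit an explicit MDP homomorphism $\langle f,\{g_s:s\in S\}\rangle$ from $M$ to $M/B$ and then verify the two defining equations of Definition~\ref{def_homomorphism} by unfolding the definitions of the quotient components. I take $f:S\to S'=B|S$ to be the canonical projection $f(s)=[s]_{B|S}$, which is surjective by construction. For the action maps, I first observe that the set $B(s)$ of blocks of $B$ containing a pair with first component $s$ has cardinality $\eta(s)$, and that $B(s)=B(s')$ whenever $f(s)=f(s')$ by Definition~\ref{def_projection}; hence $\eta$ is constant on each projection class and the abstract action set $A'_{[s]_{B|S}}$ is well defined. I then fix, for each abstract state, a bijection between its abstract actions $a'_1,\dots,a'_{\eta(s)}$ and the blocks in $B(s)$, and define $g_s(a)=a'_i$ exactly when $(s,a)$ lies in the $i$-th such block. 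Since each available action $a$ determines a unique block $[(s,a)]_B\in B(s)$, the map $g_s$ is well defined, and it is surjective onto the actions $A'_{f(s)}$ available at $f(s)$ because every block in $B(s)$ contains some pair with first component $s$.

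Next I would verify reward preservation. Unfolding, $R'(f(s),g_s(a))=R'([s]_{B|S},a'_i)$, and by Definition~\ref{def_quotient_mdp} this equals $R$ evaluated at some representative of the block $[(s,a)]_B$. Because $B$ is reward respecting (Definition~\ref{def_reward_respecting}), every pair in that block shares the same reward, so the representative's reward equals $R(s,a)$; this is precisely the first homomorphism equation and simultaneously shows that $R'$ does not depend on the chosen representative.

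For transition preservation, I first note that $f^{-1}(f(s'))=[s']_{B|S}$, so the left-hand side $P(s,a,f^{-1}(f(s')))$ is exactly the block transition probability $T(s,a,[s']_{B|S})$ of Definition~\ref{def_partition_of_MDP}. The right-hand side $P'(f(s),g_s(a),f(s'))$ equals, by Definition~\ref{def_quotient_mdp}, the quantity $T_b([(s,a)]_B,[s']_{B|S})$, i.e. the block transition probability evaluated at a representative of the block $[(s,a)]_B$. The stochastic substitution property (Definition~\ref{def_ssp}) guarantees that all representatives of a block agree on this quantity, so $T_b$ is well defined and equals $T(s,a,[s']_{B|S})$, yielding the second homomorphism equation.

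I expect the conceptual crux --- rather than any hard computation --- to be the well-definedness bookkeeping: the abstract action set, the abstract reward $R'$, and the abstract transition $P'$ are each specified via an arbitrary representative of a block, so the argument must show that the choice of representative does not matter. Identifying that the two hypotheses are exactly what is needed here (reward respecting for $R'$, SSP for $P'$), together with the constancy of $\eta$ on projection classes for $A'$, is the main thing to get right; once well-definedness is secured, both homomorphism equations follow immediately by unfolding definitions.
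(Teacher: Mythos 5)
Your proof is correct: the paper itself states this theorem as imported background (citing Ravindran's thesis) and gives no proof of its own, and your argument is precisely the canonical one that the cited source uses --- take $f$ to be the projection onto $B|S$, let $g_s$ send $a$ to the abstract action indexed by the block $[(s,a)]_B \in B(s)$, and observe that the reward-respecting property and the SSP are exactly the well-definedness conditions needed for $R'$ and $P'$ (via $T_b$) to be independent of representatives, after which both homomorphism equations follow by unfolding $f^{-1}(f(s')) = [s']_{B|S}$. Your identification of the constancy of $\eta$ on projection classes (via $B(s) = B(s')$ from Definition~\ref{def_projection}) as the point securing well-definedness of the abstract action sets is also the right bookkeeping, so there is nothing to flag.
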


\tolerance 1414 Computing the optimal state-action value function in the quotient MDP usually requires fewer computations, but does it help us act in the underlying MDP? The last theorem states that the optimal state-action value function \textit{lifted} from the minimized MDP is still optimal in the original MDP:

\begin{theorem}[Optimal value equivalence, \cite{ravindran2004}]\label{theorem_optimal_value}

Let $M'= \langle S', A', \Phi', P', R' \rangle$ be the homomorphic image of the MDP $M= \langle S, A, \Phi, P, R \rangle$ under the MDP homomorphism $h(s,a)=(f(s),g_s(a))$. For any $(s,a) \in \Phi$, $Q^*(s,a)=Q^*(f(s),g_s(a))$.

\end{theorem}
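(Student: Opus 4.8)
The plan is to show that the value function \emph{lifted} from $M'$ back to $M$ already satisfies the Bellman optimality equation on $M$, and then invoke uniqueness of its fixed point. Concretely, define $\tilde{Q}(s,a) = Q^*(f(s),g_s(a))$ for every $(s,a)\in\Phi$. Since $Q^*$ on $M'$ is the unique fixed point of the Bellman optimality operator of $M'$, and $Q^*$ on $M$ is the unique fixed point of the operator of $M$, it suffices to verify that $\tilde{Q}$ is a fixed point of the latter operator; the identity $Q^*(s,a)=Q^*(f(s),g_s(a))$ then follows at once.

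First I would write the optimality equation satisfied by $Q^*$ in $M'$ at the point $(f(s),g_s(a))$,
$$Q^*(f(s),g_s(a)) = R'(f(s),g_s(a)) + \gamma \sum_{[s']_{B|S}\in S'} P'(f(s),g_s(a),[s']_{B|S})\, \max_{a''\in A'_{[s']_{B|S}}} Q^*([s']_{B|S},a''),$$
with discount $\gamma$. Using the reward condition $R(s,a)=R'(f(s),g_s(a))$ of Definition~\ref{def_homomorphism}, the leading term is exactly $R(s,a)$. For the transition term I would regroup a sum over original next states $s'$ into the fibers $f^{-1}(f(s'))$ and apply the transition condition $P(s,a,f^{-1}(f(s')))=P'(f(s),g_s(a),f(s'))$, which rewrites the weighted sum over abstract states $S'$ as a weighted sum over original states $s'$ with weights $P(s,a,s')$.

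The crucial step, and the main obstacle, is relating the inner maximization over abstract actions in $M'$ to a maximization over primitive actions in $M$. I claim that for each next state $s'$,
$$\max_{a'} \tilde{Q}(s',a') = \max_{a''\in A'_{f(s')}} Q^*(f(s'),a'').$$
This is precisely where surjectivity of the action maps $g_{s'}$ matters: because $\tilde{Q}(s',a')=Q^*(f(s'),g_{s'}(a'))$ depends on $a'$ only through $g_{s'}(a')$, the function $\tilde{Q}(s',\cdot)$ is constant on each fiber of $g_{s'}$, and since $g_{s'}$ is onto the available abstract actions, every abstract action is attained by some primitive action. Hence the two maxima range over the identical set of values. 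Care is needed to confirm that the set of primitive actions on the left and the abstract actions on the right correspond exactly, so that no primitive action without an abstract counterpart inflates the left maximum; this matching is guaranteed by the construction of $A'$ and the counting function $\eta(s)$ in Definition~\ref{def_quotient_mdp}.

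Combining the three observations, the right-hand side of the $M'$ optimality equation evaluated at $(f(s),g_s(a))$ collapses to
$$R(s,a) + \gamma \sum_{s'} P(s,a,s')\, \max_{a'} \tilde{Q}(s',a'),$$
which is exactly the Bellman optimality operator of $M$ applied to $\tilde{Q}$ at $(s,a)$. Thus $\tilde{Q}$ is a fixed point of that operator, and by the contraction (uniqueness of the fixed point) we conclude $\tilde{Q}=Q^*$ on $M$, i.e. $Q^*(s,a)=Q^*(f(s),g_s(a))$. An equivalent route, avoiding an explicit appeal to contraction, is a direct induction on the value-iteration iterates $Q_n$: one proves $Q_n(s,a)=Q_n(f(s),g_s(a))$ for all $n$ and passes to the limit, with the inductive step using the very same reward, transition, and max-preservation identities.
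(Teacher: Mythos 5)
The paper itself contains no proof of this theorem: it is quoted directly from Ravindran's thesis \cite{ravindran2004}, so there is no internal argument to compare yours against. Judged on its own merits, your proof is correct, and it is essentially the standard argument for optimal value equivalence: lift the optimal value function of the image MDP to $\tilde{Q}(s,a)=Q^*(f(s),g_s(a))$ on $M$, verify that $\tilde{Q}$ satisfies the Bellman optimality equation of $M$ using (i) the reward condition, (ii) the transition condition applied fiber-wise over $f^{-1}(f(s'))$, and (iii) preservation of the inner maximum via surjectivity of the action maps, then conclude by uniqueness of the fixed point of the Bellman operator (or, as you note, by induction on value-iteration iterates and passing to the limit, which is the route taken in the cited reference). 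Two small corrections. First, the contraction/uniqueness step implicitly assumes a discount factor $\gamma<1$ (or some equivalent condition making $Q^*$ the unique bounded fixed point); the paper's MDP definition never specifies a discount, so this hypothesis should be made explicit rather than smuggled in. Second, your appeal to Definition~\ref{def_quotient_mdp} and the counting function $\eta(s)$ to justify the action-matching step is misplaced: that definition concerns the quotient-MDP construction, whereas this theorem is about an arbitrary homomorphic image. The matching you need --- that every abstract action admissible at $f(s')$ is attained by some primitive action admissible at $s'$, and no primitive action maps outside that set --- is precisely the surjectivity of $g_{s'}$ onto $A'_{f(s')}$ built into Definition~\ref{def_homomorphism}, and that is the correct thing to cite; with that substitution your max-preservation claim is exactly right.
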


\section{Related Work}

Balaraman Ravindran proposed Markov Decision Process (MDP) homomorphism together with a sketch of an algorithm for finding homomorphisms (i.e. finding the minimal MDP homomorphic to the underlying MDP) given the full specification of the MDP in his Ph.D. thesis \cite{ravindran2004}. The first and only algorithm (to the best of our knowledge) for finding homomorphisms from experience (online) \cite{wolfe2006} operates over Controlled Markov Processes (CMP), an MDP extended with an output function that provides more supervision than the reward function alone. Homomorphisms over CMPs were also used in \cite{wolfe2006b} to find objects that react the same to a defined set of actions.

An approximate MDP homomorphism \cite{ravindran2004approximate} allows aggregating state-action pairs with similar, but not the same dynamics. It is essential when learning homomorphisms from experience in non-deterministic environments because the estimated transition probabilities for individual state-action pairs will rarely be the same, which is required by the MDP homomorphism. Taylor et al. \cite{taylor2008} built upon this framework by introducing a similarity metric for state-action pairs as well as an algorithm for finding approximate homomorphisms.


Sorg et al. \cite{sorg2009} developed a method based on homomorphisms for transferring a predefined optimal policy to a similar task. However, their approach maps only states and not actions, requiring actions to behave the same across all MDPs. Soni et al. and Rajendran et al. \cite{soni2006, rajendran2009} also studied skill transfer in the framework of MDP homomorphisms. Their works focus on the problem of transferring policies between discrete or factored MDPs with pre-defined mappings, whereas our primary contribution is the abstraction of MDPs with continuous state spaces.

\begin{algorithm}[t]

    \caption{Abstraction}\label{abstraction}
    
    \begin{algorithmic}[1]
        
        \Procedure{Abstraction}{}
        
            \State $E \gets $ collect initial experience with an arbitrary policy $\pi$
            \State $g \gets $ a classifier for state-action pairs
            \State $B \gets OnlinePartitionIteration(E,g)$
            \State $M' \gets $ a quotient MDP constructed from $B$ according to Definition \ref{def_quotient_mdp}

        \EndProcedure
    \end{algorithmic}

\end{algorithm}

\section{Methods}

We solve the problem of abstracting an MDP with a discrete or continuous state-space and a discrete action space. The MDP can have an arbitrary reward function, but we restrict the transition function to be deterministic. This restriction simplifies our algorithm and makes it more sample-efficient (because we do not have to estimate the transition probabilities for each state-action pair).

This section starts with an overview of our abstraction process (Subsection \ref{abstraction}), followed by a description of our algorithm for finding MDP homomorphisms (Subsection \ref{sub_partitioning_algorithm}). We describe several augmentations to the base algorithm that increase its robustness in Subsection \ref{sub_speeding_up}. Finally, Subsection \ref{sub_transferring} contains the description of our transfer learning method that leverages the learned MDP homomorphism to speed up the learning of new tasks. 

\subsection{Abstraction}\label{sub_abstraction}

Algorithm \ref{abstraction} gives an overview of our abstraction process. Since we find MDP homomorphisms from experience, we first need to collect transitions that cover all regions of the state-action space. For simple environments, a random exploration policy provides such experience. But, a random walk is clearly not sufficient for more realistic environments because it rarely reaches the goal of the task. Therefore, we use the vanilla version of a deep Q-network~\cite{mnih2015} to collect the initial experience in bigger environments.

Subsequently, we partition the state-action space of the original MDP based on the collected experience with our Online Partition Iteration algorithm (Algorithm \ref{partition_iteration}). The algorithm is described in detail in Subsection \ref{sub_partitioning_algorithm}. The state-action partition $B$--the output of Algorithm \ref{partition_iteration}--induces a quotient, or abstract, MDP according to Definition \ref{def_quotient_mdp}.

The quotient MDP enables both planning optimal actions for the current task (Subsection \ref{sub_partitioning_algorithm}) and learning new tasks faster (Subsection \ref{sub_transferring}).

\begin{algorithm}[t]

    \caption{Online Partition Iteration}\label{partition_iteration}
    
    \begin{flushleft}
        \hspace*{\algorithmicindent} \textbf{Input:} Experience $E$, classifier $g$. \\
        \hspace*{\algorithmicindent} \textbf{Output:} Reward respecting SSP partition $B$. \\
    \end{flushleft}
    
    \begin{algorithmic}[1]
    
        \Procedure{OnlinePartitionIteration}{}
        
            \State $B \gets \{E\}, B' \gets \{\}$
            \State $B \gets SplitRewards(B)$

            \While {$B \ne B'$}
            
                \State $B' \gets B$
                \State $g \gets TrainClassifier(B, g)$
                \State $B|S \gets Project(B, g)$
                \For {block $c$ in $B|S$}
                    \While {$B$ contains block $b$ for which $B \ne Split(b, c, B)$}
                        \State $B \gets Split(b, c, B)$
                    \EndWhile
                \EndFor

            \EndWhile
        
        \EndProcedure
        
    \end{algorithmic}

\end{algorithm}

\subsection{Partitioning algorithm}\label{sub_partitioning_algorithm}

Our online partitioning algorithm (Algorithm \ref{partition_iteration}) is based on the Partition Iteration algorithm from \cite{givan2003}. It was originally developed for stochastic bisimulation based partitioning, and we adapted it to MDP homomorphisms (following Ravindran's sketch \cite{ravindran2004}). Algorithm \ref{sub_partitioning_algorithm} starts with a reward respecting partition obtained by separating transitions that receive distinct rewards (\textit{SplitRewards}). The reward respecting partition is subsequently refined with the \textit{Split} (Algorithm \ref{split}) operation until a stopping condition is met. \textit{Split(b, c, B)} splits a state-action block $b$ from state-action partition $B$ with respect to a state block $c$ obtained by projecting the partition $B$ onto the state space.

The projection of the state-action partition onto the state space (Algorithm \ref{state_projection}) is the most complex component of our method. We train a classifier $g$, which can be an arbitrary model, to classify state-action pairs into their corresponding state-action blocks. The training set consists of all transitions the agent experienced, with each transition belonging to a particular state-action block. During State Projection, $g$ evaluates a state under a sampled set of actions, predicting a state-action block for each action. For discrete action spaces, the set should include all available actions. The set of predicted state-action blocks determines which state block the state belongs to.

Figure \ref{fig:fig_A} illustrates the projection process: a single state $s$ is evaluated under four actions: $a_1$, $a_2$, $a_3$ and $a_4$.  The first three actions are classified into the state-action block $b_1$, whereas the last action is assigned to block $b_3$. Therefore, $s$ belongs to the state block identified by the set of the predicted state-action blocks $\{b_1, b_3\}$.

The output of Online Partition Iteration is a partition $B$ of the state-action space $\Phi$. According to Definition \ref{def_quotient_mdp}, the partition induces a quotient MDP. Since the quotient MDP is fully defined, we can compute its optimal Q-values with a dynamic programming method such as Value Iteration \cite{sutton1998book}.

To be able to act according to the quotient MDP, we need to connect it to the original MDP in which we select actions. Given a current state $s$ and a set of actions admissible in $s$, $A_s$, we predict the state-action block of each pair $(s,a_i)$, $a_i \in A_s$ using the classifier $g$. Note that Online Partition Iteration trains $g$ in the process of refining the partition. This process of predicting state-action block corresponds to a single step of State Projection: we determine which state block $s$ belongs to. Since each state in the quotient MDP corresponds to a single state block (by Definition \ref{def_quotient_mdp}), we can map $s$ to some state $s'$ in the quotient MDP.

Given the current state $s'$ in the quotient MDP, we select the action with the highest Q-value and map it back to the underlying MDP. An action in the quotient MDP can correspond to more than one action in the underlying MDP. For instance, an action that places a puck on the ground can be executed in many locations, while still having the same Q-value in the context of puck stacking. We break the ties between actions by sampling a single action in proportion to the confidence predicted by $g$: $g$ predict a state-action block with some probability given a state-action pair.

\begin{figure}[b]
    \includegraphics[width=0.45\textwidth]{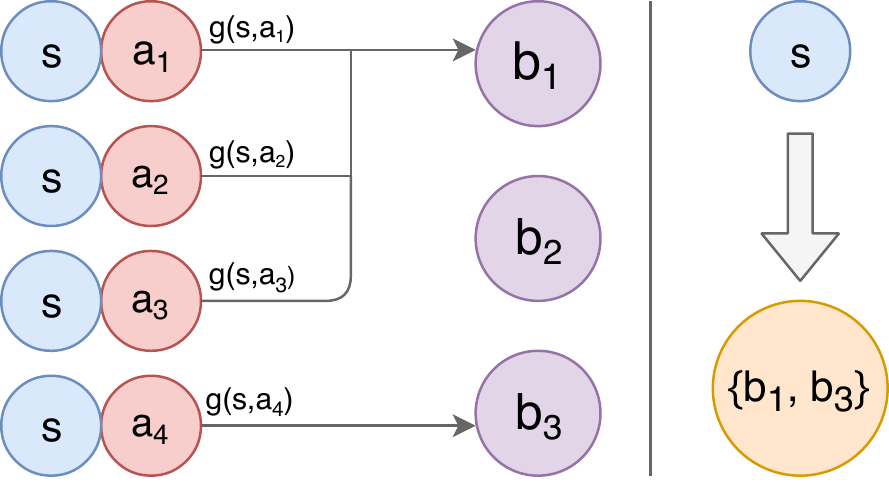}
    \caption{Projection (Algorithm \ref{state_projection}) of a single state $s$. $s$ is evaluated under actions $a_1$, $a_2$, $a_3$ and $a_4$. For each pair $(s,a_i)$, the classifier $g$ predicts its state-action block $b_j$. $s$ belongs to a state block identified by the set of state-action blocks $\{b_1,b_3\}$.}
    \label{fig:fig_A}
\end{figure}

\subsection{Increasing robustness}\label{sub_speeding_up}

\begin{algorithm}[t]

    \caption{State Projection}\label{state_projection}
    
    \begin{flushleft}
        \hspace*{\algorithmicindent} \textbf{Input:} State-action partition $B$, classifier $g$.  \\
        \hspace*{\algorithmicindent} \textbf{Output:} State partition $B|S$. \\
    \end{flushleft}

    \begin{algorithmic}[1]
        \Procedure{Project}{}
            \State $B|S \gets \{\}$
            \For  {block $b$ in $B$}
                \For {transition $t$ in $b$}
                    \State $A_s \gets SampleActions(t.next\_state)$
                    \State $B_s \gets \{\}$
                    \For {action $a$ in $A_s$}
                        \State $p \gets g.predict(t.next\_state, a)$
                        \State $B_s \gets B_s \cup \{p\}$
                    \EndFor
                    \State $Add\ t\ to\ B|S\ using\ B_s\ as\ the\ key$
                \EndFor
            \EndFor
        \EndProcedure
    \end{algorithmic}

\end{algorithm}

Online Partition Iteration is sensitive to erroneous predictions by the classifier $g$. Since the collected transitions tend to be highly unbalanced and the mapping of state-action pairs into state-action blocks can be hard to determine, we include several augmentations that increase the robustness of our method. Some of them are specific to a neural network classifier.

\begin{itemize}

\item \textbf{class balancing}: The sets of state-action pairs belonging to different state-action blocks can be extremely unbalanced. Namely, the number of transitions that are assigned a positive reward is usually low. We follow the best practices from \cite{buda2018} and over-sample all minority classes so that the number of samples for each class is equal to the size of the majority class. We found decision trees do not require oversampling; hence, we use this method only with a neural network.

\item \textbf{confidence calibration}: The latest improvements to neural networks, such as batch normalization \cite{ioffe15} and skip connections \cite{he16} (both used by our neural network in Subsection \ref{continuous_pucks_world}), can cause miscalibration of the output class probabilities \cite{guo17}. We calibrate the temperate of the softmax function applied to the output neurons using a multiclass version of Platt scaling \cite{platt99} derived in \cite{guo17}. The method requires a held-out validation set, which consists of 20\% of all data in our case.

\item \textbf{state-action block size threshold and confidence threshold}:  During State Projection, the classifier $g$ sometimes makes mistakes in classifying a state-action pair to a state-action block. Hence, the State Projection algorithm can assign a state to a wrong state block. This problems usually manifests itself with the algorithm "hallucinating" state blocks that do not exist in reality (note that there are $2^{min\{|B|, |A|\}} - 1$ possible state blocks, given a state-action partition $B$). To prevent the \textit{Split} function from over-segmenting the state-action partition due to these phantom state blocks, we only split a state-action block if the new blocks contain a number of samples higher than a threshold $T_a$. Furthermore, we exclude all predictions with confidence lower than some threshold $T_c$. Confidence calibration makes it easier to select the optimal value of $T_c$.

\end{itemize}

\begin{algorithm}[t]

    \caption{Split}\label{split}
    
    \begin{flushleft}
        \hspace*{\algorithmicindent} \textbf{Input} State-action block $b$, state block $c$, partition $B$. \\
        \hspace*{\algorithmicindent} \textbf{Output} State-action partition $B'$. \\
    \end{flushleft}
    
    \begin{algorithmic}[1]
        
        \Procedure{Split}{}
        
            \State $b_1 \gets \{\}, b_2 \gets \{\}$

            \For {transition $t$ in $b$}
                \If{$transition.next\_state \in c$}
                    \State $b_1 \gets b_1 \cup \{t\}$
                \Else
                    \State $b_2 \gets b_2 \cup \{t\}$
                \EndIf
            \EndFor
            
            \State $B' \gets B$
            
            \If{$|b_1| > 0\ \&\&\ |b_2| > 0$}
                \State $B' \gets (B' \setminus \{b\}) \cup \{b_1, b_2\}$
            \EndIf
            
        \EndProcedure
    \end{algorithmic}

\end{algorithm}

\subsection{Transferring abstract MDPs}\label{sub_transferring}

Solving a new task from scratch requires the agent to take a random walk before it stumbles upon a reward. The abstract MDP learned in the previous task can guide exploration by taking the agent into a starting state close to the goal of the task. However, how do we know which state block in the abstract MDP is a good start for solving a new task?

If we do not have any prior information about the structure of the next task, the agent needs to explore the starting states. To formalize this, we create $|B|S|$ options, each taking the agent to a particular state in the quotient MDP from the first task. Each option is a tuple $\langle I, \pi, \beta \rangle$ with 

\begin{itemize}
    \item $I$ being the set of all starting states of the MDP for the new task,
    \item $\pi$ uses the quotient MDP from the previous task to select actions that lead to a particular state in the quotient MDP (see Subsection \ref{sub_partitioning_algorithm} for more details) and
    \item $\beta$ terminates the option when the target state is reached.
\end{itemize}

The agent learns the $Q$-values of the options with a Monte Carlo update \cite{sutton1998book} with a fixed $\alpha$ (the learning rate)--the agent prefers options that make it reach the goal the fastest upon being executed. If the tasks are similar enough, the agent will find an option that brings it closer to the goal of the next task. If not, the agent can choose not to execute any option.

We use a deep Q-network to collect the initial experience in all transfer learning experiments. While our algorithm suffers from the same scalability issues as a deep Q-network when learning the \textit{initial task}, our transfer learning method makes the learning of new tasks easier by guiding the agent's exploration.

\section{Proof of Correctness}\label{sec_correctness}

This section contains the proof of the correctness of our algorithm. We first prove two lemmas that support the main theorem. The first lemma and corollary ensure that Algorithm \ref{partition_iteration} finds a reward respecting SSP partition.

\begin{lemma}\label{lemma_1}

Given a reward respecting partition  $B$ of an MDP $M= \langle S,A,\Phi,P,R \rangle $ and $(s_1,a_1),(s_2,a_2) \in \Phi$ such that $T(s_1,a_1,[s']_{B|S}) \ne T(s_2,a_2,[s']_{B|S})$ for some $s' \in S$, $(s_1,a_1)$ and $(s_2,a_2)$ are not in the same block of any reward respecting SSP partition refining $B$.

\end{lemma}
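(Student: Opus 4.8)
The plan is to argue by contradiction. Suppose, contrary to the claim, that some reward respecting SSP partition $B'$ with $B' \ll B$ places $(s_1,a_1)$ and $(s_2,a_2)$ in the same block, i.e. $(s_1,a_1) \equiv_{B'} (s_2,a_2)$. I will show that this forces $T(s_1,a_1,[s']_{B|S}) = T(s_2,a_2,[s']_{B|S})$, directly contradicting the hypothesis and thereby establishing the lemma.

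The key structural fact I need is that projection preserves refinement: if $B' \ll B$, then $B'|S \ll B|S$. To establish this, I would observe that since each block of $B'$ is contained in a unique block of $B$, there is a well-defined map $\phi$ sending every $B'$-block to the $B$-block containing it. Unwinding Definition \ref{def_projection}, for any state $x$ the set $B(x)$ of $B$-blocks meeting $x$ is exactly the image $\phi(B'(x))$ of the corresponding set $B'(x)$ of $B'$-blocks, because as $(x,y)$ ranges over pairs with first component $x$, its $B$-block is precisely $\phi$ applied to its $B'$-block. Hence $B'(x) = B'(x')$ implies $B(x) = \phi(B'(x)) = \phi(B'(x')) = B(x')$, so any two states identified by $B'|S$ are identified by $B|S$; this is exactly $B'|S \ll B|S$.

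With this in hand, the block $[s']_{B|S}$ of $B|S$ decomposes as a disjoint union of blocks $C_1, \dots, C_k$ of $B'|S$. By additivity of $P$ over this partition of the target set, $T(s_i,a_i,[s']_{B|S}) = \sum_{j} T(s_i,a_i,C_j)$ for each $i \in \{1,2\}$. Since $B'$ has the SSP and $(s_1,a_1) \equiv_{B'} (s_2,a_2)$, each summand satisfies $T(s_1,a_1,C_j) = T(s_2,a_2,C_j)$; summing over $j$ yields equality of the block transition probabilities into $[s']_{B|S}$, which is the required contradiction.

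I expect the main obstacle to be the refinement-of-projections step, since the projection in Definition \ref{def_projection} is defined indirectly through the sets $B(x)$ rather than by a direct containment of blocks, so some care is needed to check that $\phi$ interacts correctly with these sets. The subsequent additivity-and-summation argument is routine. I also note that the reward respecting hypothesis on $B$ is not used in the transition-probability computation itself; it serves only to delimit the class of candidate partitions $B'$ over which the statement quantifies.
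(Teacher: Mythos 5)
Your proof is correct and follows essentially the same route as the paper's: a proof by contradiction that decomposes $[s']_{B|S}$ into blocks of $B'|S$ and uses the SSP of $B'$ together with additivity of $T$ to force $T(s_1,a_1,[s']_{B|S}) = T(s_2,a_2,[s']_{B|S})$. You are in fact more careful than the paper at the one nontrivial step: the paper simply asserts the decomposition $[s']_{B|S} = \bigcup_{i=1}^k [s'_i]_{B'|S}$, whereas you explicitly prove that $B' \ll B$ implies $B'|S \ll B|S$ via the containment map $\phi$ and Definition~\ref{def_projection}.
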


\begin{proof}

Following the proof of Lemma 8.1 from \cite{givan2003}: proof by contradiction.

\sloppy Let $B'$ be a reward respecting SSP partition that is a refinement of $B$. Let $s' \in S$, $(s_1,a_1),(s_2,a_2) \in b \in B$ such that $T(s_1,a_1,[s']_{B|S}) \ne T(s_2,a_2,[s']_{B|S})$. Define $B'$ such that $(s_1,a_1),(s_2,a_2)$ are in the same block and $[s']_{B|S} = \bigcup_{i=1}^k [{s'}_i]_{B'|S}$. Because $B'$ is a reward respecting SSP partition, for each state block $[s'']_{B|S} \in B'|S$, $T(s_1,a_1,[s'']_{B|S}) = T(s_2,a_2,[s'']_{B|S})$. Then, $T(s_1,a_1,[s']_{B|S}) = \sum_{1 \leq i \leq k} T(s_1,a_1,[{s'}_i]_{B'|S}) = \sum_{1 \leq i \leq k} T(s_2,a_2,[{s'}_i]_{B'|S}) = T(s_2,a_2,[s']_{B|S})$. This contradicts $T(s_1,a_1,[s']_{B|S}) \ne T(s_2,a_2,[s']_{B|S})$. 

\end{proof}

\begin{corollary}\label{corollary_1}

Let $B$ be a reward respecting partition of an MDP $M= \langle S,A,\Phi,P,R \rangle $, $b$ a block in $B$ and $c$ a union of blocks from $B|S$. Every reward respecting SSP partition over $\Phi$ that refines $B$ is a refinement of the partition $Split(b,c,B)$.

\end{corollary}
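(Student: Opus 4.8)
The plan is to observe that $Split(b,c,B)$ differs from $B$ only in that the single block $b$ is replaced by the two pieces $b_1$ (the transitions in $b$ whose successor state lies in $c$) and $b_2$ (the remaining transitions of $b$), and then to argue that no reward respecting SSP partition refining $B$ can place a pair from $b_1$ in the same block as a pair from $b_2$. If either $b_1$ or $b_2$ is empty, then $Split$ returns $B$ unchanged and the claim is immediate, so I would dispose of that edge case first and henceforth assume both pieces are nonempty.

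Let $B^*$ be any reward respecting SSP partition refining $B$. Since every block of $B^*$ already sits inside some block of $B$, the only blocks of $B^*$ that could fail to refine $Split(b,c,B)$ are those contained in $b$; for such a block I must show it lies entirely in $b_1$ or entirely in $b_2$. So I would argue by contradiction: suppose some block $e$ of $B^*$ contains both $(s_1,a_1)\in b_1$ and $(s_2,a_2)\in b_2$.

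The crux is to produce a single block of $B|S$ witnessing a difference in block transition probability, so that Lemma~\ref{lemma_1} applies. Here the deterministic transition assumption does the heavy lifting: for any pair $(s,a)$, the value $T(s,a,[s']_{B|S})$ equals $1$ if the unique successor of $(s,a)$ lies in $[s']_{B|S}$ and $0$ otherwise. Let $\sigma_1$ denote the successor of $(s_1,a_1)$; since $(s_1,a_1)\in b_1$ we have $\sigma_1\in c$, and because $c$ is a \emph{union of blocks of} $B|S$, the entire block $[\sigma_1]_{B|S}$ is contained in $c$, giving $T(s_1,a_1,[\sigma_1]_{B|S})=1$. Conversely, $(s_2,a_2)\in b_2$ means its successor avoids $c\supseteq[\sigma_1]_{B|S}$, so $T(s_2,a_2,[\sigma_1]_{B|S})=0$.

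Thus $T(s_1,a_1,[\sigma_1]_{B|S})\ne T(s_2,a_2,[\sigma_1]_{B|S})$, and since $(s_1,a_1),(s_2,a_2)$ lie in the common block $b\in B$, Lemma~\ref{lemma_1} shows they cannot share a block in any reward respecting SSP refinement of $B$, contradicting $e\in B^*$. Hence no such $e$ exists, every block of $B^*$ refines $Split(b,c,B)$, and the corollary follows. I expect the main obstacle to be precisely this last translation step: one must exploit that $c$ is a union of \emph{entire} $B|S$-blocks (rather than an arbitrary set of states) in order to legitimately single out $[\sigma_1]_{B|S}$ as the block on which the hypothesis of Lemma~\ref{lemma_1} holds, and one must lean on determinism to guarantee that the block transition probabilities collapse to the two values $\{0,1\}$ that make the inequality transparent.
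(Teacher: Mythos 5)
Your proof is correct, and it shares the paper's overall skeleton: restrict attention to pairs that $Split$ separates, exhibit a single block of $B|S$ on which their block transition probabilities differ, and invoke Lemma~\ref{lemma_1}. Where you genuinely diverge is in how that witnessing block is produced. The paper, following Givan et al., argues by additivity: writing $c = \bigcup_{m=1}^{n} c_m$ with each $c_m \in B|S$, it uses $T(s,a,c) = \sum_{1 \leq m \leq n} T(s,a,c_m)$ to conclude that if $T(s_1,a_1,c) \ne T(s_2,a_2,c)$ then some summand $c_k$ must already disagree --- a pure existence argument that never mentions determinism and therefore works verbatim for stochastic transition functions. You instead lean on determinism (which the paper does assume for its algorithm, and which is what makes the membership test $t.next\_state \in c$ in Algorithm~\ref{split} a well-defined split of state-action pairs) to collapse every $T$-value to $\{0,1\}$ and to name the witness explicitly: the block $[\sigma_1]_{B|S}$ of the successor of the pair landing in $b_1$. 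Your version is more concrete and makes the role of the hypothesis that $c$ is a union of entire $B|S$-blocks transparent (it forces $[\sigma_1]_{B|S} \subseteq c$, hence $T(s_2,a_2,[\sigma_1]_{B|S}) = 0$), but it is strictly less general: it would not establish the corollary for stochastic $P$, where a pair may enter $c$ with probability strictly between $0$ and $1$ and no unique successor exists. The paper's additive argument buys that generality at the cost of being less explicit; your contradiction framing and the empty-piece edge case are harmless additions that the paper leaves implicit.
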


\begin{proof}

Following the proof of Corollary 8.2 from \cite{givan2003}.

Let $c = \bigcup_{i=1}^n [s_i]_{B|S}$, $[s_i]_{B|S} \in {B|S}$. Let $B'$ be a reward respecting SSP partition that refines $B$. $Split(b,c,B)$ will only split state-action pairs $(s_1,a_1),(s_2,a_2)$ if $T(s_1,a_1,c) \ne T(s_2,a_2,c)$. But if $T(s_1,a_1,c) \ne T(s_2,a_2,c)$, then there must be some $k$ such that $T(s_1,a_1,c_k) \ne T(s_2,a_2,c_k)$ because for any $(s,a) \in \Phi$, $T(s,a,c) = \sum_{1 \leq m \leq n} T(s,a,c_m)$. Therefore, we can conclude by Lemma \ref{lemma_1} that $[(s_1,a_1)]_{B'} \ne [(s_2,a_2)]_{B'}$.

\end{proof}

The versions of Partition Iteration from \cite{givan2003} and \cite{ravindran2004} partition a fully-defined MDP. We designed our algorithm for the more realistic case, where only a stream of experience is available. This change makes the algorithm different only during State Projection (Algorithm \ref{state_projection}). In the next lemma, we prove that the output of State Projection converges to a state partition as the number of experienced transitions goes to infinity.

\begin{table*}[!t]
    \scalebox{0.9}{
        \begin{tabular}{P{0.713\columnwidth}|P{0.433\columnwidth}|P{0.433\columnwidth}|P{0.433\columnwidth}}
        \hline 
        Task                               & Options      & Baseline     & Baseline, share weights \\
        \hline 
        2 puck stack to 3 puck stack        & $\mathbf{2558 \pm 910}$ & $5335 \pm 1540$ & $10174 \pm 5855$     \\
        3 puck stack to 2 and 2 puck stack  & $\mathbf{2382 \pm 432}$ & - & $3512 \pm 518$                    \\
        2 puck stack to stairs from 3 pucks & $\mathbf{2444 \pm 487}$ & $4061 \pm 1382$ & $4958 \pm 3514$      \\
        3 puck stack to stairs from 3 pucks & $\mathbf{1952 \pm 606}$ & $4061 \pm 1382$ & $5303 \pm 3609$      \\
        2 puck stack to 3 puck component    & $2781 \pm 605$ & $3394 \pm 999$ & $6641 \pm 5582$      \\
        \hline
        stairs from 3 pucks to 3 puck stack        & $3947 \pm 873$   & $5335 \pm 1540$   & $6563 \pm 4299$   \\
        stairs from 3 pucks to 2 and 2 puck stacks & $5552 \pm 3778$  & -                 & $5008 \pm 1998$   \\
        stairs from 3 pucks to 3 puck component    & $3996 \pm 2693$  & $3394 \pm 999$   & $4856 \pm 3600$   \\
        \hline
        3 puck component to 3 puck stack        & $\mathbf{3729 \pm 742}$ & $5335 \pm 1540$ & $8540 \pm 4908$   \\
        3 puck component to stairs from 3 pucks & $3310 \pm 627$ & $4061 \pm 1382$ & $2918 \pm 328$    \\
        \hline 
        \end{tabular}
    }
\smallskip
\caption{Transfer experiments in the pucks world domain. We measure the number of time steps before the agent reached the goal in at least 80\% of episodes over a window of 50 episodes and report the mean and standard deviation over 10 trials. Unreported scores mean that the agent never reached this target. The column labeled \textit{Options} represents our transfer learning method (Subsection \ref{sub_transferring}), \textit{Baseline} is deep Q-network described in Subsection \ref{continuous_pucks_world} that does not retain any information from the initial task and \textit{Baseline, share weights} copies the trained weights of the network from initial task to the transfer task. The bolded scores correspond to a statistically significant result for a Welch's t-test with P < 0.1.}
\label{table:1}
\end{table*}

\begin{lemma}\label{lemma_2}

Let $M= \langle S,A,\Phi,P,R \rangle $ be an MDP with a finite $A$, a finite or infinite $S$, a state-action space $\Phi$ that is a separable metric space and a deterministic $P$ defined such that each state-action pair is visited with a probability greater than zero. Let $SampleAction(s) = A_s, \forall s \in S$ (Algorithm \ref{state_projection}, line 5). Let $t_1$, $t_2$, ... be i.i.d. random variables that represent observed transitions, $g$ a 1 nearest neighbor classifier that classifies state-action pairs into state-action blocks and let $(s,a)_n$ the nearest neighbor to $(s,a)$ from a set of $n$ transitions $X_n = \{t_1, t_2, ..., t_n\}$. Let $B_n$ be a state-action partition over $X_n$ and $S_n = \bigcup_{t \in X_n} t.next\_state$. Let $(B_n|S_n)'$ be a state partition obtained by the State Projection algorithm with $g$ taking neighbors from $X_n$. $(B_n|S_n)' \rightarrow B_n|S_n$ as $n \rightarrow \infty$ with probability one.

\end{lemma}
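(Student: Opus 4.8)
The plan is to reduce the convergence of partitions to a statement about pointwise classification accuracy and then to invoke the consistency of the $1$-nearest-neighbor rule. Both $B_n|S_n$ and $(B_n|S_n)'$ are determined by attaching to each state $s' \in S_n$ a \emph{key}: the exact projection uses the set of genuine blocks $\{[(s',a)]_{B_n}\}$, whereas State Projection uses the set of \emph{predicted} blocks obtained by running the classifier $g$ on $(s',a)$ for $a \in A_{s'}$. By Definition~\ref{def_projection}, two states are grouped together iff their keys coincide, so if every relevant query $(s',a)$ receives a predicted block equal to its true block $[(s',a)]_{B_n}$, then the predicted key equals the true key for every $s'$ and the two partitions of $S_n$ are identical. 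Hence it suffices to show that, with probability one, all queries are eventually classified correctly.

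The engine is the consistency of the $1$-NN classifier. Because $\Phi$ is a separable metric space and the sampling distribution has full support (each state-action pair is visited with positive probability, so every open ball has positive probability), the nearest-neighbor distance from any fixed query $(s',a)$ to its neighbor $(s,a)_n$ in $X_n$ tends to $0$ almost surely as $n \to \infty$; this is the standard nearest-neighbor convergence result in separable metric spaces. Moreover, since $P$ is deterministic, the block label is a noise-free deterministic function of the state-action pair, so there is no label-noise obstruction: to classify $(s',a)$ correctly it is enough that its nearest neighbor shares its block. I would also note that the query $(s',a)$ for the action actually taken when $s'$ was reached is literally an element of $X_n$, so its NN distance is $0$ and it is correct for free.

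The crux, and the step I expect to be the main obstacle, is converting geometric proximity into block agreement: $(s,a)_n \to (s',a)$ yields the correct label only if the block containing $(s',a)$ also contains an entire neighborhood of it, i.e. if the partition is locally constant around the query. I would therefore argue that the blocks are separated (equivalently, that block boundaries are $\mu$-negligible and are avoided by the sampled next-states almost surely), so that for almost every query there is a radius below which membership is constant; full support then guarantees that the nearest neighbor eventually falls within that radius and inherits the correct block. Establishing this local-constancy property — rather than the nearest-neighbor limit itself — is where the real work lies.

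Finally I would assemble the pieces. Since $A$ is finite, each state in $S_n$ generates only finitely many queries, and the sampled next-states over all time form a countable set, so there are only countably many distinct query points. Each event ``query $(s',a)$ is eventually classified correctly'' holds with probability one by the two steps above, and a countable intersection of probability-one events still has probability one; a Borel--Cantelli argument over the finite action set then controls the remaining uniformity over the growing set $S_n$. At that point the predicted keys equal the true keys for every $s' \in S_n$, giving $(B_n|S_n)' = B_n|S_n$ for all sufficiently large $n$, which is exactly the claimed convergence $(B_n|S_n)' \to B_n|S_n$ with probability one.
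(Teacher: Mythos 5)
Your proof takes essentially the same route as the paper's: reduce equality of the two partitions to agreement of the per-state keys from Definition~\ref{def_projection}, and obtain correct block predictions from the Cover--Hart Convergence of the Nearest Neighbor Lemma \cite{cover67}, with separability of $\Phi$ and the positive-visitation assumption supplying its hypotheses. The difference is one of candor rather than substance: the step you single out as the crux --- upgrading metric convergence $(s,a)_n \rightarrow (s,a)$ to \emph{block agreement}, which requires the partition to be locally constant around each query (equivalently, block boundaries negligible and almost surely avoided by the samples) --- is precisely the step the paper's proof elides. The paper passes directly from the nearest-neighbor lemma to the conclusion that State Projection ``exactly follows the projection procedure,'' with no separation hypothesis in the lemma statement and no argument in the proof to bridge proximity and label agreement. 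So you have not left a gap that the paper closes; you have made explicit a gap the paper shares, and your remaining scaffolding (queries for actions actually executed at a state are themselves training points, hence classified correctly for free; countably many queries let the almost-sure events be intersected) is sound and tightens the original argument. One further mismatch that both you and the paper gloss over: by Definition~\ref{def_projection}, the true key of a state $s'$ ranges only over pairs $(s',a)$ actually present in $X_n$, whereas State Projection queries every $a \in A_{s'}$, so for finite $n$ the predicted key may contain blocks with no observed counterpart at $s'$; this is repaired only in the limit where every admissible pair at $s'$ is eventually observed, and a fully rigorous write-up of either proof would need to say so.
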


\begin{proof}

$B_n|S_n$ is obtained by projecting $B_n$ onto $S_n$. In this process, $S_n$ is divided into blocks based on $B(s) = \{[(s',a)]_{B_n} | (s',a) \in \Phi, s=s'\}$, the set of distinct blocks containing pairs of which $s$ is a component, $s \in S_n$. Given $SampleAction(s) = A_s, \forall s \in S_n$, line 8 in Algorithm \ref{state_projection} predicts a $b \in B$ for each $(s',a) \in \Phi$, such that $s = s'$. By the Convergence of the Nearest Neighbor Lemma \cite{cover67}, $(s,a)_n$ converges to $(s,a)$ with probability one. The rest of the Algorithm \ref{state_projection} exactly follows the projection procedure (Definition 3), therefore, $(B_n|S_n)' \rightarrow B_n|S_n$ with probability one.

\end{proof}

Finally, we prove the correctness of our algorithm given an infinite stream of i.i.d. experience. While the i.i.d. assumption does not usually hold in reinforcement learning (RL), the deep RL literature often leverages the experience buffer \cite{mnih2015} to ensure the training data is diverse enough. Our algorithm also contains a large experience buffer to collect the data needed to run Online Partition Iteration.

\begin{theorem}[Correctness]\label{theorem_correctness}

Let $M= \langle S,A,\Phi,P,R \rangle $ be an MDP with a finite $A$, a finite or infinite $S$, a state-action space $\Phi$ that is a separable metric space and a deterministic $P$ defined such that each state-action pair is visited with a probability greater than zero. Let $SampleAction(s) = A_s, \forall s \in S$ (Algorithm \ref{state_projection}, line 5). Let $t_1$, $t_2$, ... be i.i.d. random variables that represent observed transitions, $g$ a 1 nearest neighbor classifier that classifies state-action pairs into state-action blocks. As the number of observed $t_i$ goes to infinity, Algorithm \ref{partition_iteration} computes a reward respecting SSP partition over the observed state-action pairs with probability one.

\end{theorem}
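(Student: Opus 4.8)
The plan is to isolate the single source of randomness — the learned classifier $g$ — from the otherwise deterministic, combinatorial behavior of Algorithm \ref{partition_iteration}. I would condition on the probability-one event supplied by Lemma \ref{lemma_2}, on which the projection returned by \textit{Project} agrees with the true projection $B|S$ of the current state-action partition (Definition \ref{def_projection}). On this event, and for $n$ large enough that the nearest-neighbor predictions have converged, every \textit{Split} inside the loop is carried out against a correct state block, so Algorithm \ref{partition_iteration} reduces to the fully-specified Partition Iteration of \cite{givan2003,ravindran2004}. Everything that remains is then deterministic and combinatorial.

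Restricted to that event, I would check the two defining properties of the output separately. Reward-respecting-ness is immediate: \textit{SplitRewards} produces a reward-respecting partition by construction, and \textit{Split} only ever refines the current partition, so by Definition \ref{def_reward_respecting} the property is preserved through every iteration. For the stochastic substitution property I would read off the termination test of the outer loop: it halts only when a full pass leaves $B$ fixed, i.e. when $Split(b,c,B)=B$ for every state-action block $b$ and every state block $c\in B|S$. Since $P$ is deterministic, each transition in a block $b$ sends its next-state into exactly one block of $B|S$; hence ``no split is possible'' forces all pairs in $b$ to transition into the same block of $B|S$, so $T(s,a,[s']_{B|S})$ is constant over $(s,a)\in b$ for every $[s']_{B|S}$. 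This is exactly Definition \ref{def_ssp}.

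Termination itself is not an issue over the observed pairs, since after $n$ samples the partition has at most $|X_n|$ blocks and each successful \textit{Split} strictly increases the block count, so only finitely many splits occur. To argue that the output is moreover the intended (coarsest) abstraction rather than an over-segmented one, I would invoke Corollary \ref{corollary_1} as a loop invariant: at every step, any reward-respecting SSP partition refining the initial reward partition is a refinement of the current $B$, so the algorithm never splits two pairs that a valid SSP partition keeps together. Combined with the SSP property at termination, this pins the result down to the coarsest reward-respecting SSP partition over the observed state-action pairs, which with probability one is what Algorithm \ref{partition_iteration} returns as $n\to\infty$.

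The step I expect to be the main obstacle is making the hand-off from Lemma \ref{lemma_2} to the combinatorial argument fully rigorous. Lemma \ref{lemma_2} is an asymptotic, almost-sure statement about one projection with a \emph{fixed} partition, whereas Algorithm \ref{partition_iteration} retrains $g$ and reprojects repeatedly, and the queried pairs $(s,a)$ with $a\in A_s$ evolve with the sample-dependent partition computed so far. I would need to argue that a single probability-one event simultaneously guarantees exact projection at each of the finitely many loop iterations — leaning on separability of $\Phi$, finiteness of $A$, and the assumption that every state-action pair is visited with positive probability to ensure each queried pair lies in the closure of the observed sample, so that the Cover--Hart nearest-neighbor convergence underlying Lemma \ref{lemma_2} applies uniformly.
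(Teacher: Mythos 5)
Your proposal follows essentially the same route as the paper's proof: reward-respecting-ness comes from \textit{SplitRewards} plus the fact that \textit{Split} only refines, the SSP property is read off the outer loop's termination condition once Lemma \ref{lemma_2} guarantees the projections are asymptotically exact, and Corollary \ref{corollary_1} is invoked as an invariant to rule out over-segmentation. If anything you are more careful than the paper on the one genuinely delicate point --- the paper cites Lemma \ref{lemma_2} once and never addresses how a single almost-sure, fixed-partition convergence statement covers the finitely many retrain-and-reproject iterations with sample-dependent partitions, which you correctly identify as the main obstacle to a fully rigorous argument.
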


\begin{proof}

Loosely following the proof of Theorem 8 from \cite{givan2003}.

Let $B$ be a partition over the observed state-action pairs, $S$ the set of observed states and (B|S)' the result of StateProjection(B,g) (Algorithm \ref{state_projection}).

Algorithm \ref{partition_iteration} first splits the initial partition such that a block is created for each set of transitions with a distinct reward (line 2). Therefore, Algorithm \ref{partition_iteration} refines a reward respecting partition from line 2 onward.

Algorithm 1 terminates with $B$ when $B = Split(b,[s]_{(B|S)'},B)$ for all $b \in B$, $[s]_{(B|S)'} \in (B|S)'$. $Split(b,[s]_{(B|S)'},B)$ will split any block $b$ containing $(s_1,a_1),(s_2,a_2)$ for which $T(s_1,a_1,[s]_{(B|S)'}) \ne T(s_2,a_2,[s]_{(B|S)'})$. According to Lemma 2, $(B|S)' \rightarrow B|S$ as $N \rightarrow \infty$ with probability one. Consequently, any partition returned by Algorithm \ref{partition_iteration} must be a reward respecting SSP partition.

Since Algorithm \ref{partition_iteration} first creates a reward respecting partition, and each step only refines the partition by applying $Split$, we can conclude by Corollary 1 that each partition encountered, including the resulting partition, must contain a reward respecting SSP partition.

\end{proof}

\section{Experiments}

We investigate the following questions with our experiments:

\begin{enumerate}
    \item Can Online Partition Iteration find homomorphisms in environments with continuous state spaces and high-dimensional action spaces (characteristic for robotic manipulation tasks)?
    \item Do options induced by quotient MDPs speed-up the learning of new tasks?
    \item How does Online Partition Iteration compare to the only previous approach to finding homomorphisms\cite{wolfe2006}?
\end{enumerate}

Section \ref{continuous_pucks_world} describes our experiments and results concerning question 1 and 2, and section \ref{discrete_blocks_world} presents a comparison with the prior work. We discuss the results in Subsection \ref{exp_discussion}.

\subsection{Continuous pucks world}\label{continuous_pucks_world}

We designed the pucks world domain (Figure \ref{fig:fig_4}) to approximate real-world robotic manipulation tasks. The state is represented by a 112x112 depth image and each pixel in the image is an admissible action. Hence, 12544 actions can be executed in each state. Environments with such a high branching factor favor homomorphisms, as they can automatically group actions into a handful of classes (e.g. "pick puck" and "do nothing") for each state. If an action corresponding to a pixel inside of a puck is selected, the puck is transported in the agent's hand. In the same way, the agent can stack pucks on top of each other or place them on the ground. Corner cases such as placing a puck outside of the environment or making a stack of pucks that would collapse are not allowed. The agent gets a reward of 0 for visiting each non-goal state and a reward of 10 for reaching the goal states. The environment terminates when the goal is reached or after 20 time steps. We implemented four distinct types of tasks: stacking pucks in a single location, making two stacks of pucks, arranging pucks into a connected component and building stairs from pucks. The goal states of the tasks are depicted in Figure \ref{fig:fig_4}. We can instantiate each task type with a different number of pucks, making the space of possible tasks and their combinations even bigger.

\begin{figure}[t]
    \begin{subfigure}[t]{0.23\columnwidth}
        \includegraphics[width=1.0\textwidth]{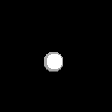}
        \caption{}
    \end{subfigure}
    \begin{subfigure}[t]{0.23\columnwidth}
        \includegraphics[width=1.0\textwidth]{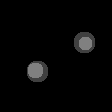}
        \caption{}
    \end{subfigure}
    \begin{subfigure}[t]{0.23\columnwidth}
        \includegraphics[width=1.0\textwidth]{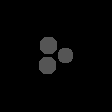}
        \caption{}
    \end{subfigure}
    \begin{subfigure}[t]{0.23\columnwidth}
        \includegraphics[width=1.0\textwidth]{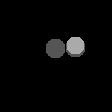}
        \caption{}
    \end{subfigure}
    \caption{The goal states of the four types of tasks in our continuous pucks world domain. a) the task of stacking three pucks on top of one another, b) making two stacks of two pucks, c) arranging three pucks into a connected component, d) building stairs from three pucks.}
    \label{fig:fig_4}
\end{figure}

To gather the initial experience for partitioning, we use a shallow fully-convolutional version of the vanilla deep Q-network. Our implementation is based on the OpenAI baselines \cite{openai_baselines} with the standard techniques: separate target network with a weight update every 100 time steps and a replay buffer that holds the last 10 000 transitions. The network consists of five convolutional layers with the following settings (number of filters, filter sizes, strides): (32, 8, 4), (64, 8, 2), (64, 3, 1), (32, 1, 1), (2, 1, 1). The ReLU activation function is applied to the output of each layer except for the last one. The last layer predicts two maps of Q-values with the resolution 14x14 (for 112x112 inputs)--the two maps correspond to the two possible hand states: "hand full" and "hand empty". The appropriate map is selected based on the state of the hand, and bilinear upsampling is applied to get a 112x112 map of Q-values, one for each action. We trained the network with a Momentum optimizer with the learning rate set to 0.0001 and momentum to 0.9, batch size was set to 32. The agent interacted with the environment for 15000 episodes with an $\epsilon$-greedy exploration policy; $\epsilon$ was linearly annealed from 1.0 to 0.1 for 40000 time steps.

Online Partition Iteration requires a second neural network--the classifier $g$. Our initial experiments showed that the predictions of the architecture described above lack in resolution. Therefore, we chose a deeper architecture: the DRN-C-26 version of Dilated Residual Networks \cite{yu2017}. We observed that the depth of the network is more important than the width (the number of filters in each layer) for our classification task. Capping the number of filters at 32 (the original architecture goes up to 512 filters in the last layers) produces results indistinguishable from the original network. DRN-C-26 decreases the resolution of the feature maps in three places using strided convolutions, we downsample only twice to keep the resolution high. We train the network for 1500 steps during every iteration of Online Partition Iteration. The learning rate for the Momentum optimizer started at 0.1 and was divided by 10 at steps 500 and 1000, momentum was set to 0.9. The batch size was set to 64 and the weight decay to 0.0001.

Figure \ref{fig:fig_5} reports the results of a grid search over state-action block size thresholds and classification confidence thresholds described in Subsection \ref{sub_speeding_up}. Online Partition Iteration can create a near-optimal partition for the three pucks stacking task. On the other hand, our algorithm is less effective in the component arrangement and stairs building tasks. These two tasks are more challenging in terms of abstraction because the individual state-action blocks are not as clearly distinguishable as in puck stacking.

\begin{figure}[t]
    \begin{subfigure}[t]{0.45\columnwidth}
        \includegraphics[width=1.0\textwidth]{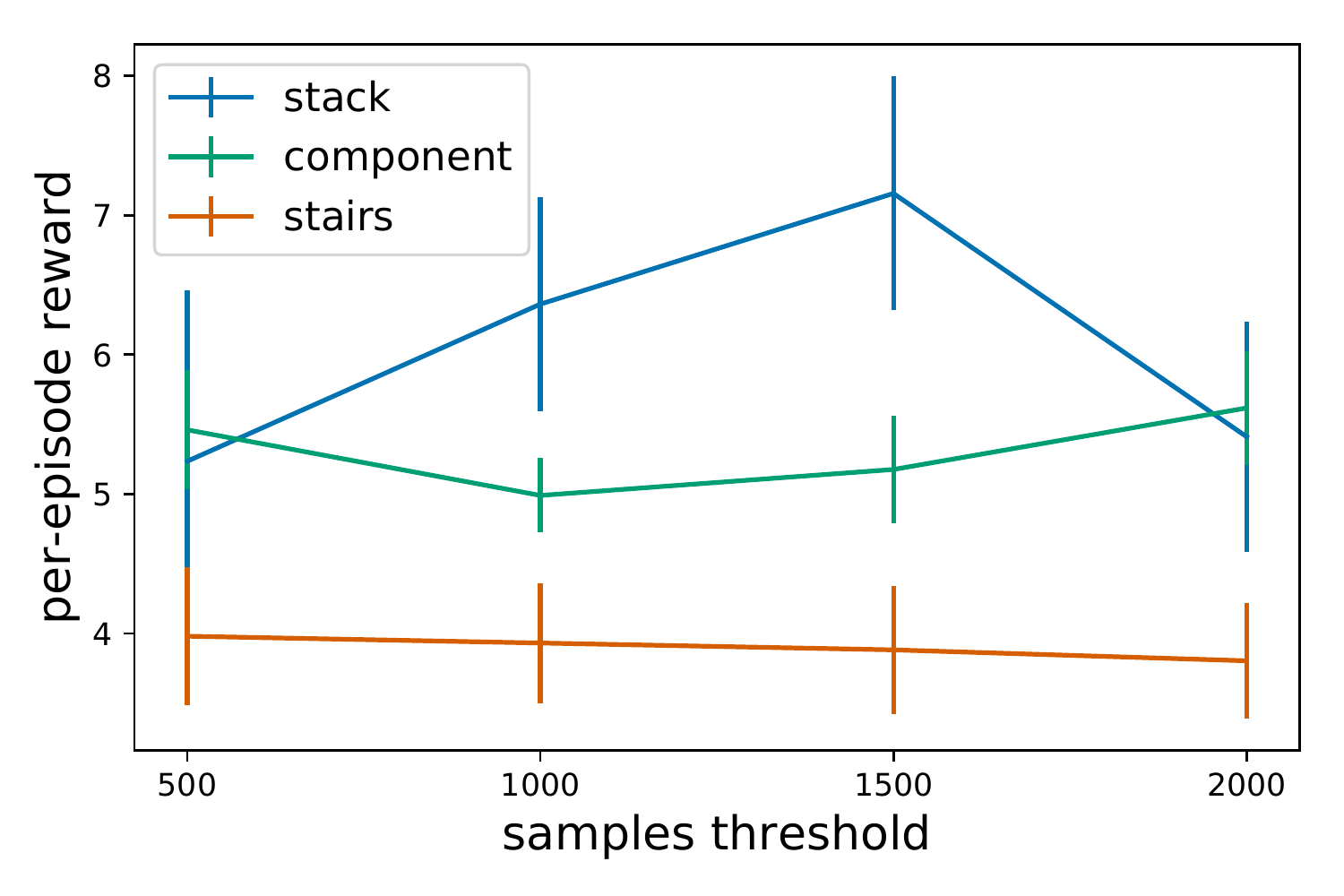}
        \caption{}
    \end{subfigure}
    \hspace{1.5em}
    \begin{subfigure}[t]{0.45\columnwidth}
        \includegraphics[width=1.0\textwidth]{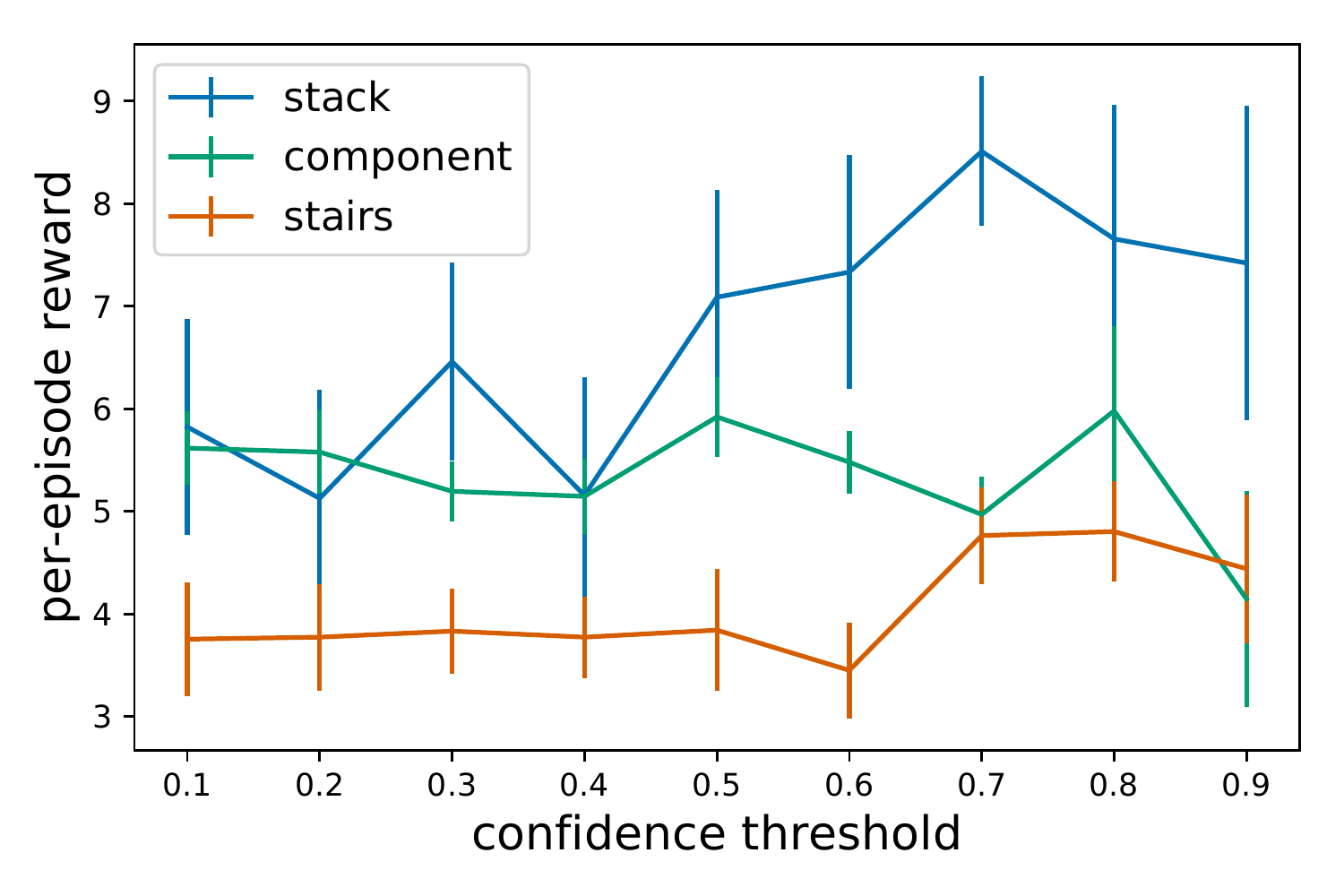}
        \caption{}
    \end{subfigure}
    \caption{Grid searches over state-action block size thresholds and prediction confidence thresholds (described in Subsection \ref{sub_speeding_up}). The y-axis represents the average per-episode reward (the maximum is 10) obtained by planning in the quotient MDP induced by the resulting partition. Stack, component and stairs represent the three puck world tasks shown in Figure \ref{fig:fig_4}. We report the means and standard deviations over 20 runs with different random seeds.}
    \label{fig:fig_5}
\end{figure}

Next, we investigate if the options induced by the found partitions transfer to new tasks (Table \ref{table:1}). For puck stacking, a deep Q-network augmented with options from the previous tasks significantly outperforms both of our baselines. "Baseline" is a vanilla deep Q-network that does not retain any information from the initial task, whereas "Baseline, share weights" remembers the learned weights from the first task. Options are superior to the weights sharing baseline because they can take the agent to any desirable state, not just the goal. For instance, the 2 and 2 puck stacking task benefits from the option "make a stack of two pucks"; hence, options enable faster learning than weight sharing. We would also like to highlight one failure mode of the weight sharing baseline: the agent can sometimes get stuck repeatedly reaching the goal of the initial task without going any further. This behavior is exemplified in the transfer experiment from 2 puck stacking to 3 puck stacking. Here, the weight sharing agent continually places two pucks on top of one another, then lifts the top puck and places it back, which leads to slower learning than in the no-transfer baseline. Options do not suffer from this problem.

As reported in Figure \ref{fig:fig_5}, the learned partitions for the stairs building and component arrangement tasks underperform compared to puck stacking. Regardless, we observed a speed-up compared to the no-transfer baseline in all experiments except for the transfer from stairs from 3 pucks to 3 puck component. Options also outperform weight sharing in 3 out of 5 experiments with the non-optimal partitions, albeit not significantly.

\subsection{Discrete blocks world}\label{discrete_blocks_world}

Finally, we compare our partitioning algorithm to the decision tree method from \cite{wolfe2006} in the blocks world environment. The environment consists of three blocks that can be placed in four positions. The blocks can be stacked on top of one another, and the goal is to place a particular block, called the \textit{focus block}, in a goal position and height. With four positions and three blocks, 12 tasks of increasing difficulty can be generated. The agent is penalized with -1 reward for each action that does not lead to the goal; reaching the goal state results in 100 reward.

Although a neural network can learn to solve this task, a decision tree trains two orders of magnitude faster and often reaches better performance. We used a decision tree from the scikit-learn package \cite{scikit_learn} with the default settings as our $g$ classifier. All modifications from Subsection \ref{sub_speeding_up} specific to a neural network were omitted: class balancing and confidence thresholding. We also disabled the state-action block size threshold because the number of unique transitions generated by this environment is low and the decision tree does not make many mistakes. Despite the decision tree reaching high accuracy, we set a limit of 100 state-action blocks to avoid creating thousands of state-action pairs if the algorithm fails. The abstract MDP was recreated every 3000 time steps and the task terminated after 15000 time steps.

Figure \ref{fig:fig_3} compares the decision tree version of our algorithm with the results reported in \cite{wolfe2006}. There are several differences between our experiments and the algorithm in \cite{wolfe2006}: Wolfe's algorithm works with a Controlled Markov Process (CMP), an MDP augmented with an output function that provides richer supervision than the reward function. Therefore, their algorithm can start segmenting state-action blocks before it even observes the goal state. CMPs also allow an easy transfer of the learned partitions from one task to another; we solve each task separately. On the other hand, each action in Wolfe's version of the task has a 0.2 chance of failure, but we omit this detail to satisfy the assumptions of our algorithm. Even though each version of the task is easier in some ways are harder in others, we believe the comparison with the only previous algorithm that solves the same problem is valuable.

\begin{figure}[t]
    \includegraphics[width=0.44\textwidth]{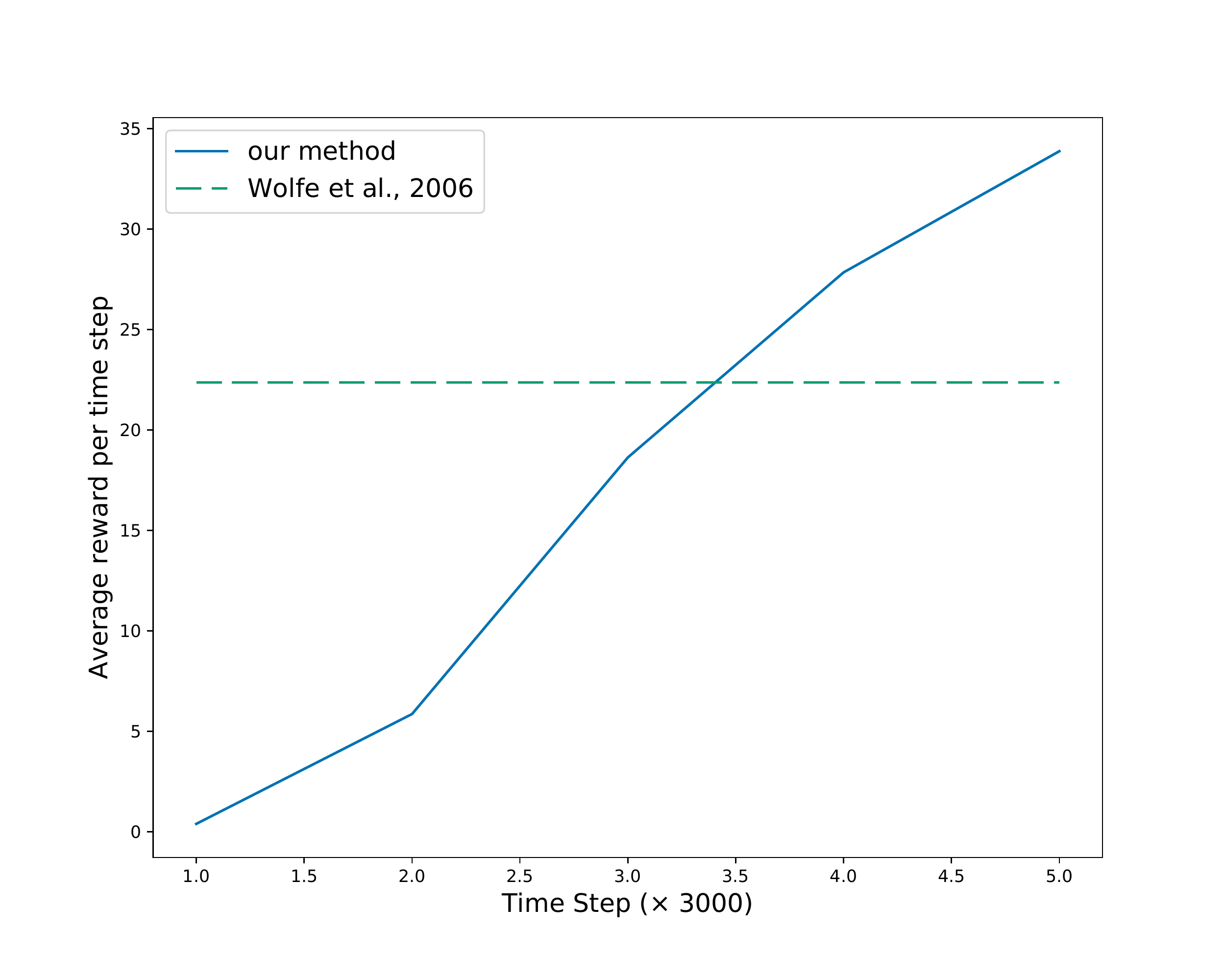}
    \caption{Comparison with Wolfe et al. \cite{wolfe2006} in the Blocks World environment. The horizontal line marks the highest mean reward per time step reached by Wolfe et al. We averaged our results over 100 runs with different goals.}
    \label{fig:fig_3}
\end{figure}

\subsection{Discussion}\label{exp_discussion}

We return to the questions posited at the beginning of this section. Online Partition Iteration can find the right partition of the state-action space as long as the individual state-action blocks are clearly identifiable. For tasks with more complex classification boundaries, the partitions found are suboptimal, but still useful. We showed that options speed-up learning and outperform the baselines in the majority of the transfer experiments. Our algorithm also outperformed the only previous method of the same kind \cite{wolfe2006} in terms of finding a consistent partition.

The main drawback of Online Partition Iteration is that is it highly influenced by the accuracy of the classifier $g$. During state projection, it takes only one incorrectly classified action (out of 12544 actions used in our pucks world experiments) for the state block classification to be erroneous. Confidence thresholding helps in the task of stacking pucks (Figure \ref{fig:fig_5}b), as it can filter out most of the errors. However, trained classifiers for the other two tasks, arranging components and building stairs, often produce incorrect predictions with a high confidence.

Moreover, the errors during state projection get amplified as the partitioning progresses. Note that the dataset of state-action pairs (inputs) and state-action blocks (classes) is created based on the previous state partition, which is predicted by the classifier. In other words, the version of the classifier $g$ at step $t$ generates the classes that will be used for its training at step $t + 1$. A classifier trained on noisy labels is bound to make even more errors at the next iteration. In particular, we observed that the error rate grows exponentially in the number of steps required to partition the state-action space.

In these cases, the partitioning algorithm often stops because of the limit on the number of state-action blocks (10 for the pucks domain). That is why the performance for the component arrangement and stairs building tasks is not sensitive to the state-action block size threshold (Figure \ref{fig:fig_5}a). Nevertheless, these noisy partitions also help with transfer learning, as shown in Table \ref{table:1}.

\section{Conclusion}

We developed Online Partition Iteration, an algorithm for finding abstract MDPs in discrete and continuous state spaces from experience, building on the existing work of Givan et al., Ravindran and Wolfe et al. \cite{givan2003,ravindran2004,wolfe2006}. We proved the correctness of our algorithm under certain assumptions and demonstrated that it could successfully abstract MDPs with high-dimensional continuous state spaces and high-dimensional discrete action spaces. In addition to being interpretable, the abstract MDPs can guide exploration when learning new tasks. We created a transfer learning method in the framework of options \cite{sutton1999}, and demonstrated that it outperforms the baselines in the majority of experiments.

\clearpage

\bibliographystyle{ACM-Reference-Format}  
\balance  
\bibliography{main}  

\end{document}